\documentclass[twoside,11pt]{article}

%

\usepackage{jmlr2e}
\usepackage{xcolor}
\usepackage{amsmath}
\usepackage{todonotes}


\newtheorem{assumption}[theorem]{Assumption}

\newcommand{\EE}{\mathbb{E}}



\ShortHeadings{}{}
\firstpageno{1}

\begin{document}

\title{ Correction to \textit{``Wasserstein distance estimates for the distributions of numerical
approximations to ergodic stochastic differential equations"} }

\author{\name Daniel Paulin\hspace{0.3mm}\footnotemark[1] \email dpaulin@ed.ac.uk\\
       \addr School of Mathematics\\
       University of Edinburgh, United Kingdom 
       \AND
    \name Peter A. Whalley\hspace{0.3mm}\thanks{Both authors contributed equally.} \email peter.whalley@math.ethz.ch\\
       \addr Seminar for Statistics\\
       ETH Z{\"u}rich, Z{\"u}rich, Switzerland 
       }

\editor{}

\maketitle

\begin{abstract}
A method for analyzing non-asymptotic guarantees of numerical discretizations of ergodic SDEs in Wasserstein-2 distance is presented by Sanz-Serna and Zygalakis in \textit{Wasserstein distance estimates for the distributions of numerical
approximations to ergodic stochastic differential equations}. They analyze the UBU integrator which is strong order two and only requires one gradient evaluation per step, resulting in desirable non-asymptotic guarantees, in particular $\mathcal{O}(d^{1/4}\epsilon^{-1/2})$ steps to reach a distance of $\epsilon > 0$ in Wasserstein-2 distance away from the target distribution. However, there is a mistake in the local error estimates in \cite{Sanz-Serna2021}, in particular, a stronger assumption is needed to achieve these complexity estimates. This note reconciles the theory with the dimension dependence observed in practice in many applications of interest.
\end{abstract}

\begin{keywords}
   Markov Chain Monte Carlo; Langevin diffusion; Bayesian inference; numerical analysis of SDEs; strong convergence
\end{keywords}
\section{Introduction}
In \cite{Sanz-Serna2021}, the authors present a framework to analyze the convergence rate and asymptotic bias in Wasserstein-2 distance of numerical approximations to ergodic SDEs. In their framework, they consider underdamped Langevin dynamics on $\mathbb{R}^{2d}$ which is given by 
\begin{equation} \label{eq:ULD}
    \begin{split}
        dv &= -\gamma v dt - c\nabla f(x)dt + \sqrt{2\gamma c}dW(t)\\
        dx &= vdt,
    \end{split}
\end{equation}
with $c,\gamma >0$. It can be shown under mild assumptions that \eqref{eq:ULD} is ergodic and has invariant measure $\pi^{*}$ with density proportional to $\exp{\left(-f(x) - \frac{1}{2c}\|v\|^{2}\right)}$. 

\subsection{Assumptions}
Let $\mathcal{H}:\mathbb{R}^{d} \to \mathbb{R}^{d \times d}$ be the Hessian of $f$.
\begin{assumption}\label{assum:hessian}
    $f:\mathbb{R}^{d} \to \mathbb{R}^{d}$ is twice differentiable, $m$-strongly convex and $L-$smooth, i.e.
    \[
    \forall x \in \mathbb{R}^{d}, \qquad m I_{d \times d} \prec \mathcal{H}(x) \prec L I_{d \times d},
    \]
    where $I_{d \times d} \in \mathbb{R}^{d \times d}$ is the $d$-dimensional identity matrix.
\end{assumption}

\begin{assumption}\label{assum:higher_order_smoothness}
    Let $\mathcal{H}:\mathbb{R}^{d} \to \mathbb{R}^{d \times d}$ be the Hessian of $f$, $f$ be three times differentiable and there is a constant $L_{1} \geq 0$ such that at each point $x \in \mathbb{R}^{d}$, for arbitrary $(w_{1},w_{2}) \in \mathbb{R}^{d}\times \mathbb{R}^{d}$
    \[
    \|\mathcal{H}'(x)[w_{1},w_{2}]\| \leq L_{1}\|w_{1}\| \|w_{2}\|.
    \]
\end{assumption}
Note that this assumption can be also reformulated in terms of the following norm.
\begin{definition}\label{def:123norm}
    For $A \in \mathbb{R}^{d \times d \times d}$
    \[
    \|A\|_{\{1\}\{2\}\{3\}} = \sup\left\{\left.\sum^{d}_{i,j,k=1}A_{ijk}x_{i}y_{j}z_{k} \right| \sum_{i}x_i^2\leq 1,\sum_{j}y_j^2\le 1, \sum_{k}z_k^2\le 1\right\}
    \]
\end{definition}
Then it is easy to check that Assumption \ref{assum:higher_order_smoothness} is equivalent to 
    \begin{equation}\label{eq:assum1equivalent}
    \|\mathcal{H}'(x)\|_{\{1\}\{2\}\{3\}} \leq L_{1}\quad \text{ for every }x \in \mathbb{R}^{d}.
    \end{equation}
\subsection{Wasserstein distance}
\begin{definition}
    Let $\pi_{1}$ and $\pi_{2}$ be two probability measures on $\mathbb{R}^{2d}$, the $2$-Wasserstein distance between $\pi_{1}$ and $\pi_{2}$ with respect to the positive definite matrix $P \in \mathbb{R}^{2d \times 2d}$ is given by
    \begin{equation}
        \mathcal{W}_{P}(\pi_{1},\pi_{2}) = 
        \left(\inf_{\zeta \in Z}\int_{\mathbb{R}^{2d}} \|x-y\|^{2}_{P} d\zeta (x,y)\right)^{1/2},
    \end{equation}
    where $Z$ is the set of all couplings between $\pi_{1}$ and $\pi_{2}$ and $\|\xi\|_{P}  = \left(\xi^{T}P\xi\right)^{1/2}$ for all $\xi \in \mathbb{R}^{2d}$.
\end{definition}

\section{UBU integrator and error in dimension dependence}
In \cite{Sanz-Serna2021} they considered the UBU integrator, a numerical integrator for \eqref{eq:ULD}, which is strong order $2$ and only requires one gradient evaluation per iteration. It is defined by the following updated rule for a stepsize $h>0$
\begin{align}
    v_{n+1} &= \mathcal{E}(h)v_{n} - h\mathcal{E}(h/2)c\nabla f(y_{n}) + \sqrt{2\gamma c}\int^{t_{n+1}}_{t_{n}}\mathcal{E}(t_{n+1}-s)dW(s),\label{eq:UBU_v}\\
    x_{n+1} &= x_{n} + \mathcal{F}(h)v_{n} -h\mathcal{F}(h/2)c\nabla f(y_{n})+ \sqrt{2\gamma c}\int^{t_{n+1}}_{t_{n}}\mathcal{F}(t_{n+1}-s)dW(s),\label{eq:UBU_x}\\
    y_{n} &= x_{n} + \mathcal{F}(h/2)v_{n} + \sqrt{2\gamma c}\int^{t_{n+1/2}}_{t_{n}}\mathcal{F}(t_{n+1/2}-s)dW(s),\label{eq:UBU_y}
\end{align}
where $\mathcal{E}(t) = \exp{\left(-\gamma t\right)}$ and $\mathcal{F}(t) = \frac{1-\exp{\left(-\gamma t\right)}}{\gamma}$. Due to the high strong order properties of the scheme, the UBU scheme shows improved dimension dependence in terms of non-asymptotic guarantees, $\mathcal{O}(d^{1/4})$. This is supported by numerics in some applications in \cite{zapatero2017word,chada2023unbiased}. In \cite{Sanz-Serna2021} under Assumptions \ref{assum:hessian} and \ref{assum:higher_order_smoothness} they show in Theorem 25 this improved dimension dependence, however, there is a mistake in the local error estimates, in particular on page 31 they make use of the bound
\begin{equation}\label{eq:dimension_inequality}
    \mathbb{E}\left(\|\mathcal{H}'(x)[\overline{v},\overline{v}]\|^{2}\right) \leq L^{2}_{1}\mathbb{E}\left(\|\overline{v}\|^{4}\right) \leq 3L^{2}_{1}c^{2}d
\end{equation}
for $\overline{v} \sim \mathcal{N}(0,c)$. However, it is straightforward to show that 
\[
\mathbb{E}\left(\|\overline{v}\|^{4}\right) = c^{2}(d^{2}+2d)
\]
and under Assumption \ref{assum:higher_order_smoothness} it is not possible to achieve non-asymptotic guarantees of order $\mathcal{O}(d^{1/4})$. The focus of this note is to reconcile the numerics with the theory by introducing a stronger assumption than Assumption \ref{assum:higher_order_smoothness}, used in \cite{chen2023does} which achieves non-asymptotic guarantees of order $\mathcal{O}(d^{1/4})$ and is verifiable for many applications. This additional assumption is introduced in Section \ref{sec:strongly_hessian_lipschitz}. We also correct an issue in the fifth step of the local error proof in \cite{Sanz-Serna2021} and correct the constant $C_{0}$ in Theorem 25. This is done in Section \ref{sec:C_0new} and in Section \ref{sec:Theorem25} a new version of Theorem 25 is stated with the new constants.

\subsection{Strongly Hessian Lipschitz assumption}\label{sec:strongly_hessian_lipschitz}
\cite{chen2023does} introduced the notion of strongly Hessian Lipschitz. We are going to use this new concept (Assumption \ref{assum:strong_hessian_lip}) instead of Assumption \ref{assum:higher_order_smoothness}, which was used in \cite{Sanz-Serna2021}, to correct dimension dependence in inequality \eqref{eq:dimension_inequality}. 

The strongly Hessian Lipschitz property relies on the following tensor norm.
\begin{definition}
    For $A \in \mathbb{R}^{d \times d \times d}$, let
    \[
    \|A\|_{\{1,2\}\{3\}} = \sup_{x\in \mathbb{R}^{d\times d}, y\in \mathbb{R}^d}\left\{\left.\sum^{d}_{i,j,k=1}A_{ijk}x_{ij}y_{k} \right| \sum_{i,j=1}^{d}x^{2}_{ij}\leq 1, \sum^{d}_{k=1}y^{2}_{k}\leq 1\right\}
    \]
\end{definition}
\begin{assumption}\label{assum:strong_hessian_lip}
    $f:\mathbb{R}^{d} \to \mathbb{R}$ is thrice differentiable and $L^{s}_{1}$-strongly Hessian Lipschitz if 
    \[
    \|\mathcal{H}'(x)\|_{\{1,2\}\{3\}} \leq L^{s}_{1}
    \]
    for all $x \in \mathbb{R}^{d}$.
\end{assumption}

From the definition, it is clear that $\|A\|_{\{1,2\}\{3\}}\ge \|A\|_{\{1\}\{2\}\{3\}}$, and so by \eqref{eq:assum1equivalent}, the strong Hessian Lipschitz property (Assumption \ref{assum:strong_hessian_lip}) implies the Hessian Lipschitz property (Assumption \ref{assum:higher_order_smoothness}) with $L_1=L_1^{s}$. We will show a result in the other direction in Lemma \ref{lem:123bnd}.

It is also easy to check that the strong Hessian Lipschitz property does not introduce dimension dependency for product target distributions, since $A\in \mathbb{R}^{d\times d\times d}$ with  $A_{ijk}=0$ unless $i=j=k$ (diagonal tensors), $\|A\|_{\{1,2\},\{3\}}=\max_{i} |A_{iii}|$, and so $L_1^s$ equals the maximum of the supremum of the absolute value of third derivatives amongst the potentials of all components. Other examples of interest which do not introduce dimension dependency include the Bayesian multinomial regression (see \cite[Lemma H.6]{chada2023unbiased}), ridge separable functions and $2$-layer neural networks \cite{chen2023does}.

The following Lemma allows us to control the quantity 
\[\EE_{p\sim \mathcal{N}(0,I_d)}\left(\|A[p,p,\cdot]\|^2\right)=\EE_{p\sim \mathcal{N}(0,I_d)}\left[\sum_{k\le d}  \left(\sum_{i,j\le d} A_{ijk} p_i p_j\right)\left(\sum_{l,m\le d} A_{lmk} p_l p_m\right)\right].\]
This is a special case of Lemma 13 of \cite{chen2023does}, but with an explicit constant in the bound (their constant was not made explicit).
\begin{lemma}\label{lemma:Gaussian_chaos}
Let $p\sim \mathcal{N}(0,I_d)$, and \[g(p)=\sum_{k\le d}  \left(\sum_{i,j\le d} A_{ijk} p_i p_j\right)\left(\sum_{l,m\le d} A_{lmk} p_l p_m\right).\]
Then \[\EE g(p)\le 3 d \|A\|_{\{12\}\{3\}}^2.\]
\end{lemma}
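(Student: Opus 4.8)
The plan is to reduce the Gaussian average to linear algebra on the ``$\{1,2\}$-versus-$\{3\}$'' unfolding of $A$. For each $k\le d$ write $M^{(k)}=(A_{ijk})_{i,j=1}^{d}\in\mathbb{R}^{d\times d}$, so that $\sum_{i,j}A_{ijk}p_ip_j=p^{T}M^{(k)}p$ and $g(p)=\sum_{k\le d}(p^{T}M^{(k)}p)^{2}$. I would also introduce the matrix $\mathcal{A}\in\mathbb{R}^{d^{2}\times d}$ whose $k$-th column is the vectorization of $M^{(k)}$, i.e. $\mathcal{A}_{(ij),k}=A_{ijk}$. The first point to record is that $\|A\|_{\{12\}\{3\}}$ is exactly the spectral norm $\|\mathcal{A}\|_{\mathrm{op}}$: for $x\in\mathbb{R}^{d\times d}$ and $y\in\mathbb{R}^{d}$ one has $\sum_{i,j,k}A_{ijk}x_{ij}y_{k}=(\mathrm{vec}\,x)^{T}\mathcal{A}y$, and taking the supremum over $\|x\|_{F}\le1$, $\|y\|_{2}\le1$ returns the largest singular value of $\mathcal{A}$.

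Next I would evaluate $\EE\,(p^{T}M^{(k)}p)^{2}$ slice-by-slice using Isserlis' theorem (Wick's theorem), $\EE[p_ip_jp_lp_m]=\delta_{ij}\delta_{lm}+\delta_{il}\delta_{jm}+\delta_{im}\delta_{jl}$, which gives
\[
\EE\big[(p^{T}M^{(k)}p)^{2}\big]=(\mathrm{tr}\,M^{(k)})^{2}+\|M^{(k)}\|_{F}^{2}+\mathrm{tr}\big((M^{(k)})^{2}\big).
\]
Since $\mathrm{tr}((M^{(k)})^{2})=\langle M^{(k)},(M^{(k)})^{T}\rangle_{F}\le\|M^{(k)}\|_{F}^{2}$ by Cauchy--Schwarz, summing over $k$ yields
\[
\EE\,g(p)\le\sum_{k\le d}(\mathrm{tr}\,M^{(k)})^{2}+2\sum_{k\le d}\|M^{(k)}\|_{F}^{2}.
\]

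It then remains to bound each of the two sums by $d\,\|A\|_{\{12\}\{3\}}^{2}$. For the first, $\mathrm{tr}\,M^{(k)}=\langle\mathrm{vec}\,I_{d},\mathcal{A}_{\cdot,k}\rangle$, so the vector $(\mathrm{tr}\,M^{(k)})_{k\le d}$ equals $\mathcal{A}^{T}\,\mathrm{vec}\,I_{d}$ and hence $\sum_{k}(\mathrm{tr}\,M^{(k)})^{2}=\|\mathcal{A}^{T}\mathrm{vec}\,I_{d}\|_{2}^{2}\le\|\mathcal{A}\|_{\mathrm{op}}^{2}\|\mathrm{vec}\,I_{d}\|_{2}^{2}=d\,\|A\|_{\{12\}\{3\}}^{2}$. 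For the second, $\sum_{k}\|M^{(k)}\|_{F}^{2}=\|\mathcal{A}\|_{F}^{2}=\mathrm{tr}(\mathcal{A}^{T}\mathcal{A})$, and $\mathcal{A}^{T}\mathcal{A}$ is a $d\times d$ positive semidefinite matrix whose eigenvalues are the squared singular values of $\mathcal{A}$, so its trace is at most $d$ times its largest eigenvalue, i.e. $\|\mathcal{A}\|_{F}^{2}\le d\,\|\mathcal{A}\|_{\mathrm{op}}^{2}=d\,\|A\|_{\{12\}\{3\}}^{2}$. Adding the two bounds gives $\EE\,g(p)\le 3d\,\|A\|_{\{12\}\{3\}}^{2}$.

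The main obstacle — and really the whole point of the lemma — is to avoid the naive route: $g(p)=\|\mathcal{A}^{T}\mathrm{vec}(pp^{T})\|_{2}^{2}\le\|A\|_{\{12\}\{3\}}^{2}\|p\|^{4}$ holds pointwise, but taking expectations then only produces the $\EE\|p\|^{4}=d^{2}+2d$ growth that this note is correcting. The gain comes from averaging the quadratic forms first, which replaces the $\|p\|^{4}$ factor by $\sum_{k}\|M^{(k)}\|_{F}^{2}=\|\mathcal{A}\|_{F}^{2}$; the genuinely useful estimate is then that this Frobenius norm is only $O(d)$ times the squared operator norm, because the unfolding $\mathcal{A}$ has merely $d$ columns (equivalently, rank at most $d$). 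Everything else is a routine second- and fourth-moment computation for standard Gaussians.
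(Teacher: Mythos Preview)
Your proof is correct and follows essentially the same approach as the paper: both expand the fourth Gaussian moment into the three terms $(\mathrm{tr}\,M^{(k)})^{2}+\|M^{(k)}\|_{F}^{2}+\mathrm{tr}((M^{(k)})^{2})$, bound the cross term by the Frobenius term, and then control the two remaining sums by $d\|A\|_{\{12\}\{3\}}^{2}$ via the trace and operator norm of the $d\times d$ matrix $\mathcal{A}^{T}\mathcal{A}=\sum_{i}A_{i,\cdot,\cdot}^{T}A_{i,\cdot,\cdot}$. Your packaging via the unfolding matrix $\mathcal{A}$ and $\mathrm{vec}\,I_{d}$ is a slightly slicker rewriting of the paper's $\overline{A}$ and all-ones vector argument, but the underlying inequalities are identical.
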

\begin{proof}
Due to the independence of the components,  $\EE(p_{i}p_{j}p_{l}p_{m})=0$ unless there are either two pairs of the same or four of the same amongst the indices.
We know that $\EE(p_i^4)=3$. Hence, we have
\begin{align*}
\EE(g(p))=\sum_{i,j,k} \left(A_{iik}A_{jjk} + A_{ijk}^2+A_{ijk}A_{jik}\right)\le
\sum_{i,j,k} \left(A_{iik}A_{jjk} +2 A_{ijk}^2\right).
\end{align*}
We have that 
\begin{align*}
&\|A\|_{\{12\}\{3\}}=\sup_{x,y}\left\{\left.\sum_{i_1,i_2,i_3} A_{i_1i_2i_3}x_{i_1i_2}y_{i_3}\right|\sum_{i_1,i_2} x_{i_1i_2}^2\le 1, \sum_{i_3} y_{i_3}^2\le 1\right\}\\
&=\sup_{y} \left\{\left.\left(\sum_{i_1,i_2} \left(\sum_{i_3}A_{i_1i_2i_3} y_{i_3}\right)^2\right)^{1/2}\right| \sum_{i_3} y_{i_3}^2\le 1 \right\}\\
&=\sup_{y:\|y\|\le 1} \left(\sum_{i_1,i_2} \left<A_{i_1,i_2,\cdot}, y\right>^2\right)^{1/2}=\left\|\sum_{i_1,i_2} A_{i_1,i_2,\cdot}\cdot A_{i_1,i_2,\cdot}^T\right\|^{1/2}\\
&=\left\|\sum_{i_1} A_{i_1,\cdot,\cdot}^T\cdot A_{i_1,\cdot,\cdot}\right\|^{1/2}.
\end{align*}
Now, it is easy to see that
\[2\sum_{i,j,k} A_{ijk}^2= 2\mathrm{Tr}\left(\sum_{i_1}A_{i_1,\cdot,\cdot}^T\cdot A_{i_1,\cdot,\cdot}\right)\le 2d \|A\|_{\{12\}\{3\}}^2.\]
For the other term $\sum_{i,j,k} A_{iik}A_{jjk}$, we define a matrix $\overline{A}\in \mathbb{R}^{d\times d}$ as $\overline{A}_{ik}=A_{iik}$. Let $e\in \mathbb{R}^{d}$ be a vector of ones, i.e. $e_1=1,\ldots,e_d=1$.  Using these, we have
\begin{align*}
\sum_{i,j,k} A_{iik}A_{jjk}=e^T \overline{A}^T \overline{A} e
\le d\|\overline{A}\|^2.
\end{align*}
We have
\begin{align*}
&\|A\|_{\{12\}\{3\}}=\sup_{x,y}\left\{\left.\sum_{i_1,i_2,i_3} A_{i_1i_2i_3}x_{i_1i_2}y_{i_3}\right|\sum_{i_1,i_2} x_{i_1i_2}^2\le 1, \sum_{i_3} y_{i_3}^2\le 1\right\}\\
\intertext{by only considering $x$ such that $x_{i_1i_2} = 0$ for $i_1 \neq i_2$,}
&\ge \sup_{x,y}\left\{\left.\sum_{i_1,i_3} A_{i_1i_1i_3}x_{i_1i_1}y_{i_3}\right|\sum_{i_1} x_{i_1i_1}^2\le 1, \sum_{i_3} y_{i_3}^2\le 1\right\}=\|\overline{A}\|,\end{align*} hence $\sum_{i,j,k} A_{iik}A_{jjk}\le d \|A\|_{\{12\}\{3\}}^2$, and the claim of the lemma follows.
\end{proof}

Using Lemma \ref{lemma:Gaussian_chaos} and starting from the top of page 31 we have 
\[
\mathbb{E}(\|\mathcal{H}'(\overline{x})[\overline{v},\overline{v}]\|^{2}) \leq 3(L^{s}_{1})^{2}c^{2}d,
\]
and we can simply replace $L_{1}$ by $L^{s}_{1}$ in the estimates of \cite{Sanz-Serna2021}.

Note that although the strongly Hessian Lipschitz assumption is stronger than the previous Hessian Lipschitz assumption (i.e. $L_{1}\le L_{1}^{s}$), it is possible to show that every Hessian Lipschitz function is also strongly Hessian Lipschitz, due to the following result.

\begin{lemma}\label{lem:123bnd}
For any $A\in \mathbb{R}^{d\times d\times d}$, $\|A\|_{\{12\}\{3\}}\le \sqrt{d} \|A\|_{\{1\}\{2\}\{3\}}$. Hence every $L_1$-Hessian Lipschitz function is $\sqrt{d}L_1$-strongly Hessian Lipschitz.
\end{lemma}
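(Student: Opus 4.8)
The plan is to reduce each of the two tensor norms to a supremum, over a common unit vector $w\in\mathbb{R}^d$, of a matrix norm of one and the same contracted $d\times d$ matrix, and then to invoke the elementary comparison between the Frobenius and operator norms of a $d\times d$ matrix.

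For a unit vector $w\in\mathbb{R}^d$ let $B^{(w)}\in\mathbb{R}^{d\times d}$ be the contraction with entries $B^{(w)}_{ij}=\sum_{k}A_{ijk}w_k$. Factoring each joint supremum into a supremum over $w$ of an inner supremum over the remaining variables — no interchange of independent suprema is needed, only nested optimization — one gets, exactly as in the computation already carried out in the proof of Lemma \ref{lemma:Gaussian_chaos},
\[
\|A\|_{\{12\}\{3\}}=\sup_{\|w\|\le 1}\Bigl(\sum_{i,j}\bigl(B^{(w)}_{ij}\bigr)^2\Bigr)^{1/2}=\sup_{\|w\|\le 1}\|B^{(w)}\|_F,
\]
since for fixed $w$ the inner supremum of $\sum_{i,j}B^{(w)}_{ij}x_{ij}$ over $\{x:\|x\|_F\le 1\}$ equals $\|B^{(w)}\|_F$; similarly, for fixed $w$ the inner supremum of $\sum_{i,j}B^{(w)}_{ij}u_iv_j$ over unit vectors $u,v$ equals the operator norm of $B^{(w)}$, so that
\[
\|A\|_{\{1\}\{2\}\{3\}}=\sup_{\|w\|\le 1}\|B^{(w)}\|_{\mathrm{op}}.
\]

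The remaining step is the pointwise bound $\|B\|_F\le\sqrt d\,\|B\|_{\mathrm{op}}$, valid for every $B\in\mathbb{R}^{d\times d}$, which follows by writing $\|B\|_F^2=\sum_{l=1}^d\sigma_l(B)^2\le d\,\sigma_1(B)^2=d\,\|B\|_{\mathrm{op}}^2$ in terms of the singular values $\sigma_1(B)\ge\cdots\ge\sigma_d(B)\ge 0$. Applying this with $B=B^{(w)}$ gives, for every $\|w\|\le 1$,
\[
\|B^{(w)}\|_F\le\sqrt d\,\|B^{(w)}\|_{\mathrm{op}}\le\sqrt d\,\|A\|_{\{1\}\{2\}\{3\}},
\]
and taking the supremum over $\|w\|\le 1$ on the left yields $\|A\|_{\{12\}\{3\}}\le\sqrt d\,\|A\|_{\{1\}\{2\}\{3\}}$. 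The statement about Hessian Lipschitz functions is then immediate: applying this inequality pointwise with $A=\mathcal{H}'(x)$ and using the equivalence \eqref{eq:assum1equivalent} shows that an $L_1$-Hessian Lipschitz $f$ satisfies $\|\mathcal{H}'(x)\|_{\{12\}\{3\}}\le\sqrt d\,L_1$ for all $x$, i.e.\ it is $\sqrt d\,L_1$-strongly Hessian Lipschitz.

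I do not expect a genuine obstacle here; the only points needing care are the two reformulations of the first paragraph (each is just a nested optimization over compact sets) and the observation that the rank in the Frobenius/operator comparison is automatically at most $d$ because $B^{(w)}$ is $d\times d$, so the factor is $\sqrt d$ rather than anything larger. One can check the bound is sharp by taking $A_{ijk}=\mathbf{1}[i=j]\,\mathbf{1}[k=1]$, for which $B^{(w)}=w_1 I_d$ and both sides equal $\sqrt d$ times $\|A\|_{\{1\}\{2\}\{3\}}$. If a more self-contained presentation is preferred, one may instead start from the identity $\|A\|_{\{12\}\{3\}}^2=\bigl\|\sum_{i}A_{i,\cdot,\cdot}^T A_{i,\cdot,\cdot}\bigr\|$ derived in the proof of Lemma \ref{lemma:Gaussian_chaos} and bound that spectral norm by its trace, but the matrix-norm comparison above is the most transparent route.
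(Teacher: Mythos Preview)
Your proof is correct, and it takes a slightly different route from the paper. The paper slices along the \emph{first} index: it recalls the identity $\|A\|_{\{12\}\{3\}}^2=\bigl\|\sum_{i}A_{i,\cdot,\cdot}^T A_{i,\cdot,\cdot}\bigr\|$, observes that restricting $x=e_i$ in Definition~\ref{def:123norm} gives $\|A_{i,\cdot,\cdot}\|_{\mathrm{op}}\le \|A\|_{\{1\}\{2\}\{3\}}$, and then applies the triangle inequality to the sum of $d$ positive semidefinite matrices to get the factor $d$. You instead contract along the \emph{third} index, reducing both tensor norms to suprema over a common unit vector of, respectively, the Frobenius and operator norms of the same matrix $B^{(w)}$, and then invoke the standard bound $\|B\|_F\le\sqrt{d}\,\|B\|_{\mathrm{op}}$. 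Your route is arguably the more transparent one, since it isolates exactly where the $\sqrt{d}$ comes from (rank at most $d$ in the singular-value sum) and yields the sharpness example you give for free; the paper's route has the minor advantage of reusing verbatim an identity already established in Lemma~\ref{lemma:Gaussian_chaos}. Your closing remark about bounding the spectral norm of $\sum_i A_{i,\cdot,\cdot}^T A_{i,\cdot,\cdot}$ by its trace would actually give a weaker inequality (it leads to $\sum_{i,j,k}A_{ijk}^2$, not $d\|A\|_{\{1\}\{2\}\{3\}}^2$); the paper's argument uses the triangle inequality rather than the trace bound there.
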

\begin{proof}
In the proof of Lemma \ref{lemma:Gaussian_chaos}, we have shown that
\begin{align*}
\|A\|_{\{12\}\{3\}}=\left\|\sum_{i_1} A_{i_1,\cdot,\cdot}^T\cdot A_{i_1,\cdot,\cdot}\right\|^{1/2}.
\end{align*}
Let $e_i=(0,\ldots, 0,1,0,\ldots,0)\in 
\mathbb{R}^d$ be the unit vector with 1 in component $i$. Using Definition \ref{def:123norm}, we have for every $i\le d$,
\begin{align*}&\|A\|_{\{1\}\{2\}\{3\}} = \sup\left\{\left.\sum^{d}_{i,j,k=1}A_{ijk}x_{i}y_{j}z_{k} \right| \sum_{i}x_i^2\leq 1,\sum_{j}y_j^2\le 1, \sum_{k}z_k^2\le 1\right\} \\
&\ge \sup\left\{\left.\sum^{d}_{i,j,k=1}A_{ijk}x_{i}y_{j}z_{k} \right| x_i=e_i,\sum_{j}y_j^2\le 1, \sum_{k}z_k^2\le 1\right\}
=\|A_{i,\cdot,\cdot}^T\cdot A_{i\cdot,\cdot}\|^{1/2},\end{align*}
and the claim follows by rearrangement.
\end{proof}

\section{Non-asymptotic guarantees for UBU}
In \cite{Sanz-Serna2021} they consider contraction and discretization bias of the UBU scheme with $\gamma = 2$ and $h \leq 2$ in the $2$-Wasserstein distance with respect to $P_{h} := \hat{P} \otimes I_{d}$, where
\begin{equation}\label{eq:P}
    \hat{P} = \begin{pmatrix}
        1 & 1\\
        1 & 2
    \end{pmatrix}.
\end{equation}
They define the weighted Hilbert-space norm with respect to the matrix $P_{h}$, $\|\cdot \|_{L^{2},P_{h}}$ and corresponding inner product $\left\langle \cdot,\cdot \right\rangle_{L^{2},P_{h}}$.
They then define the iterates of UBU, $(\xi_{n})_{n \in \mathbb{N}}$ with $\xi_{0} \sim \pi$, by the update rule $(v_{n+1},x_{n+1}) = \xi_{n+1} = \psi_{h}(\xi_{n},t_{n})$, $t_{n} = nh$, $n = 0,1,...$, $h >0$ to be the one-step approximation governed by the UBU integrator with initial condition $\xi \in \mathbb{R}^{2d}$ and $\phi_{h}(\cdot,\cdot)$ to be the exact counterpart with shared Brownian incremements. They introduce at each time level $n$, the random variable $\hat{\xi}_{n}\sim \pi^{*}$ to be the optimal coupling such that
$\mathcal{W}_{P}(\pi^{*},\Psi_{h,n}\pi) = \|\hat{\xi}_{n}-\xi_{n}\|_{L^{2},P_{h}}$, where $\Psi_{h,n}\pi$ is the measure of $\xi_{n}$. They have shown in \cite{Sanz-Serna2021}[Theorem 18] that for $\gamma = 2$, $c=\overline{c}/(L+m)$, where $\overline{c} \in (0,4)$, there is a $h_{0}$, such that, for any $h \leq h_{0}$ and $n \in \mathbb{N}$
\begin{equation}\label{eq:contraction}
    \|\xi^{(2)}_{n+1} -\xi^{(1)}_{n+1}\|^{2}_{P_{h}} \leq \rho_{h}\|\xi^{(2)}_{n} -\xi^{(1)}_{n}\|^{2}_{P_{h}},
\end{equation}
where $\rho_{h} \in (0,1)$ for realizations $(\xi^{(i)}_{k})_{k \in \mathbb{N}}$ for $i =1,2$ of the UBU discretization.

\begin{assumption}[Assumption 22 of \cite{Sanz-Serna2021}]\label{assum:22}
There is a decomposition
\[
\phi_{h}(\hat{\xi}_{n},t_{n})-\psi_{h}(\hat{\xi}_{n},t_{n}) = \alpha_{h}(\hat{\xi}_{n},t_{n}) + \beta_{h}(\hat{\xi}_{n},t_{n}),
\]
and positive constants $p$, $h_{0}$, $C_{0}$, $C_{1}$, $C_{2}$ such that for $n \geq 0$ and $h \leq h_{0}$:
\begin{equation}
    \left|\left\langle \Psi_{h}(\hat{\xi}_{n},t_{n}) - \psi_{h}(\xi_{n},t_{n}),\alpha_{h}(\hat{\xi}_{n},t_{n})\right\rangle \right| \leq C_{0}h\|\hat{\xi}_{n}-\xi_{n}\|_{L^{2},P_{h}}\|\alpha_{h}(\hat{\xi}_{n},t_{n})\|_{L^{2},P_{h}}
\end{equation}
and 
\begin{equation}
    \|\alpha_{h}(\hat{\xi}_{n},t_{n})\|_{L^{2},P_{h}}\leq C_{1}h^{p+1/2}, \qquad \|\beta_{h}(\hat{\xi}_{n},t_{n})\|_{L^{2},P_{h}}\leq C_{2}h^{p+1}.
\end{equation}
\end{assumption}

 \begin{theorem}[Theorem 23  of \cite{Sanz-Serna2021}]
     Assume that the integrator satisfies \cite{Sanz-Serna2021}[Assumption 22] and in addition, there are constants $h_{0}>0$, $r>0$ such that for $h\leq h_{0}$ the contractivity estimate \eqref{eq:contraction} holds with $\rho_{h} \leq (1-rh)^{2}$. Then, for any initial distribution $\pi$, stepsize $h \leq h_{0}$, and $n=0,1,...,$
     \begin{equation}
         \mathcal{W}_{P_{h}}(\pi^{*},\Psi_{h,n}\pi) \leq (1-hR_{h})^{n}\mathcal{W}_{P_{h}}(\pi^{*},\pi) + \left(\frac{\sqrt{2}C_{1}}{\sqrt{R_{h}}} + \frac{C_{2}}{R_{h}}\right)h^{p}
     \end{equation}
     with
     \[
     R_{h} = \frac{1}{h}\left(1-\sqrt{(1-rh)^{2} + C_{0}h^{2}}\right) = r + o(1), \qquad \textnormal{as} \qquad h \downarrow 0.
     \]
 \end{theorem}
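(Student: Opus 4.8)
The plan is to run the standard ``local error plus geometric contraction'' argument, carried out at the level of \emph{squared} distances. Write $e_n := \mathcal{W}_{P_h}(\pi^{*},\Psi_{h,n}\pi)$, and recall that $\hat\xi_n\sim\pi^{*}$ is the optimal $\mathcal{W}_{P_h}$-coupling of $\pi^{*}$ with $\Psi_{h,n}\pi$, so that $e_n = \|\hat\xi_n-\xi_n\|_{L^2,P_h}$ and, since $\xi_0\sim\pi$, $e_0 = \mathcal{W}_{P_h}(\pi^{*},\pi)$. Because $\pi^{*}$ is invariant for the exact dynamics and $\hat\xi_n$ may be taken measurable with respect to the driving Brownian path up to time $t_n$ (hence independent of the increment on $[t_n,t_{n+1}]$), the exact flow $\phi_h(\hat\xi_n,t_n)$ is again distributed as $\pi^{*}$; coupling it to $\xi_{n+1}=\psi_h(\xi_n,t_n)$ through the shared Brownian increment gives $e_{n+1}\le \|\phi_h(\hat\xi_n,t_n)-\psi_h(\xi_n,t_n)\|_{L^2,P_h}$. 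Inserting the Assumption~\ref{assum:22} decomposition, $\phi_h(\hat\xi_n,t_n)-\psi_h(\xi_n,t_n) = X_n + \alpha_n + \beta_n$, where $X_n := \psi_h(\hat\xi_n,t_n)-\psi_h(\xi_n,t_n)$, $\alpha_n := \alpha_h(\hat\xi_n,t_n)$ and $\beta_n := \beta_h(\hat\xi_n,t_n)$.

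Next I would expand $\|X_n+\alpha_n+\beta_n\|_{L^2,P_h}^2$ and bound the six resulting terms. The contraction estimate \eqref{eq:contraction}, applied pathwise to the two UBU realizations started from $\hat\xi_n$ and from $\xi_n$ with shared noise, gives $\|X_n\|_{L^2,P_h}^2\le \rho_h e_n^2\le (1-rh)^2 e_n^2$. For the cross term with $\alpha_n$ I would use the \emph{first} inequality of Assumption~\ref{assum:22}, $|\langle X_n,\alpha_n\rangle_{L^2,P_h}|\le C_0 h\, e_n\,\|\alpha_n\|_{L^2,P_h}$, followed by the elementary bound $2C_0 h\,e_n\|\alpha_n\|\le C_0 h^2 e_n^2 + C_0\|\alpha_n\|^2$; this is exactly the step that produces the $C_0 h^2$ inside $R_h$. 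The remaining cross terms are handled by Cauchy--Schwarz, $|\langle X_n,\beta_n\rangle|\le (1-rh)e_n\|\beta_n\|$ and $|\langle\alpha_n,\beta_n\rangle|\le\|\alpha_n\|\|\beta_n\|$. Substituting $\|\alpha_n\|_{L^2,P_h}\le C_1 h^{p+1/2}$, $\|\beta_n\|_{L^2,P_h}\le C_2 h^{p+1}$, and using the defining relation $(1-hR_h)^2 = (1-rh)^2 + C_0 h^2$, collecting terms yields a scalar recursion of the form
\[
e_{n+1}^2 \;\le\; (1-hR_h)^2\,e_n^2 \;+\; 2\kappa_1 h^{p+1} e_n \;+\; \kappa_2 h^{2p+1},
\]
valid for all $h\le h_0$ after shrinking $h_0$ if necessary, with $\kappa_1,\kappa_2$ explicit expressions in $C_0,C_1,C_2$.

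Finally I would solve this recursion. One shows by induction on $n$ that $e_n\le (1-hR_h)^n e_0 + \Delta_h$ with $\Delta_h := \big(\tfrac{\sqrt2 C_1}{\sqrt{R_h}} + \tfrac{C_2}{R_h}\big)h^p$: the base case is immediate, and expanding $\big((1-hR_h)^{n+1}e_0+\Delta_h\big)^2$ reduces the inductive step to the two scalar inequalities $\kappa_1 h^p\le (1-hR_h)R_h\,\Delta_h$ and $2\kappa_1 h^{p+1}\Delta_h + \kappa_2 h^{2p+1}\le \big(1-(1-hR_h)^2\big)\Delta_h^2$. Both hold for $h_0$ small using $R_h\le r$ (because $\sqrt{(1-rh)^2+C_0h^2}\ge 1-rh$), the identity $1-(1-hR_h)^2 = hR_h(2-hR_h)$ together with $hR_h<1$, and the inequality $\sqrt{a+b}\le\sqrt a+\sqrt b$ applied to split $\Delta_h$; the factor $\sqrt2$ in $\Delta_h$ is precisely what one calibrates so that the second inequality closes, and this is the point at which the numerical constants in $\kappa_1,\kappa_2$ must be tracked carefully. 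Combined with $e_0 = \mathcal{W}_{P_h}(\pi^{*},\pi)$ this is the asserted bound. The asymptotic statement is a one-line Taylor expansion: $\sqrt{(1-rh)^2+C_0h^2}=1-rh+\mathcal{O}(h^2)$, so $R_h = \tfrac1h\big(1-(1-rh+\mathcal{O}(h^2))\big)=r+\mathcal{O}(h)=r+o(1)$ as $h\downarrow 0$.

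The step I expect to require the most care is the treatment of $\langle X_n,\alpha_n\rangle$. The component $\alpha_n$ carries the ``strong order $p+\tfrac12$'' part of the local error, i.e.\ it is a half power of $h$ larger than $\beta_n$; a naive bound $|\langle X_n,\alpha_n\rangle|\le (1-rh)e_n\|\alpha_n\|$ would feed a term of order $h^{p+1/2}e_n$ into the linear part of the recursion and, through the geometric sum, produce a bias of order $h^{p-1/2}$ --- precisely the degradation one must avoid. Assumption~\ref{assum:22}'s sharper estimate, with its extra factor $h$, demotes this contribution to order $h^{p+3/2}e_n$, which after the elementary inequality above is absorbed into $(1-hR_h)^2 e_n^2$ plus an $\mathcal{O}(h^{2p+1})$ constant; meanwhile, working with squared distances throughout ensures $\|\alpha_n\|^2\sim h^{2p+1}$ enters only as a constant, so that dividing by $1-(1-hR_h)^2\sim hR_h$ and taking a square root restores the $\mathcal{O}(h^p)$ rate. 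Everything else is bookkeeping of elementary inequalities, the only subtlety being to choose $h_0$ small enough that the coefficients of the scalar recursion satisfy the two inequalities above.
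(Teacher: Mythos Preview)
The present paper does not prove this theorem: it is quoted verbatim as Theorem~23 of Sanz-Serna and Zygalakis (2021), without proof, and is used only as input to the subsequent corrections. Hence there is no proof in this paper to compare your proposal against.

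That said, your outline is the standard argument for results of this type and matches what one expects the original proof to do. The essential ideas are all correctly identified: couple via the exact flow from $\hat{\xi}_n$, split off $X_n=\psi_h(\hat{\xi}_n,t_n)-\psi_h(\xi_n,t_n)$, use pathwise contractivity for $\|X_n\|_{L^2,P_h}$, and---crucially---exploit the extra factor $h$ in the Assumption~\ref{assum:22} bound on $\langle X_n,\alpha_n\rangle_{L^2,P_h}$ so that, after the quadratic splitting $2C_0 h\,e_n\|\alpha_n\|\le C_0 h^2 e_n^2 + C_0\|\alpha_n\|^2$, the $C_0 h^2 e_n^2$ term combines with $(1-rh)^2 e_n^2$ to give $(1-hR_h)^2 e_n^2$. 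Your discussion at the end of why a naive Cauchy--Schwarz on this cross term would lose a half power of $h$ is exactly the point of the assumption.

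Two small remarks. First, you twice invoke ``shrinking $h_0$ if necessary'', but the theorem as stated claims the bound for the \emph{given} $h_0$; a cleaner route that avoids this is to separate $\beta_n$ at the level of norms via the triangle inequality (which yields the $C_2/R_h$ term directly through the geometric sum $\sum_k (1-hR_h)^k \|\beta\|\le \|\beta\|/(hR_h)$), and handle only $\|X_n+\alpha_n\|^2$ at the level of squares; then your condition~(i) becomes $(1-rh)C_2\le (1-hR_h)R_h\Delta_h$, which holds without restriction because $R_h\le r$ implies $1-hR_h\ge 1-rh$ and $R_h\Delta_h\ge C_2 h^p$. Second, tracking the constants so that precisely $\sqrt{2}C_1/\sqrt{R_h}$ (rather than, say, $\sqrt{1+C_0}\,C_1/\sqrt{R_h}$) appears requires a little more than the generic AM--GM you indicate; the factor $\sqrt{2}$ arises most naturally from $\|X_n+\alpha_n\|^2\le (1-hR_h)^2 e_n^2 + \text{const}\cdot h^{2p+1}$ followed by $\sqrt{a^2+b^2}\le a+b$ and the identity $1-(1-hR_h)^2\ge hR_h$, but you do not quite show that the ``const'' is $2C_1^2$ rather than $(1+C_0)C_1^2$. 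These are bookkeeping issues, not a gap in the strategy.
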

 The remaining subsections are devoted to correcting the constants in Assumption 22 for the UBU integrator and $p=2$ and thereby the non-asymptotic guarantees by Theorem 23 of \cite{Sanz-Serna2021}.
 \subsection{The local error of UBU: Error in the fifth step}\label{sec:C_0new}
Throughout this section we use the same notation as \cite{Sanz-Serna2021}[Section 7.6], in particular, $(v_{n+1},x_{n+1}) = \xi_{n+1}$ and $(\Tilde{v}_{n+1},\Tilde{x}_{n+1})$ denotes the velocity component and position of a UBU step initialized at
$\hat{\xi}_n = (\hat{v}_n, \hat{x}_n )$. In the fifth step of \cite{Sanz-Serna2021}[Section 7.6] they use the equality 
\[
\left|\left\langle \psi_{h}(\hat{\xi}_{n},t_{n}) - \psi_{h}(\xi_{n},t_{n}),\alpha_{h}(\hat{\xi}_{n},t_{n})\right\rangle_{L^{2},P_{h}}\right| = |\mathbb{E}(\left\langle \tilde{v}_{n+1}-v_{n+1},\alpha_{v}\right\rangle)|.
\]
Unfortunately, this is incorrect, due to the matrix inner product defined by $P_{h}$, we have
\[
\left|\left\langle \psi_{h}(\hat{\xi}_{n},t_{n}) - \psi_{h}(\xi_{n},t_{n}),\alpha_{h}(\hat{\xi}_{n},t_{n})\right\rangle_{L^{2},P_{h}}\right| = |\mathbb{E}(\left\langle \tilde{v}_{n+1}-v_{n+1},\alpha_{v}\right\rangle + \left\langle \tilde{x}_{n+1}-x_{n+1},\alpha_{v}\right\rangle)|.
\]
The first term on the right-hand-side of this expression was bounded in \cite{Sanz-Serna2021}[Eq. (43)] in terms of $\mathbb{E}(\|\hat{v}_{n}-v_{n}\|^{2})$, $\mathbb{E}(\|\hat{x}_{n}-x_{n}\|^{2})$ and $\mathbb{E}(\|\alpha_{v}\|^{2})$. The additional term can be treated by the same argument as the fourth step of \cite{Sanz-Serna2021}[Section 7.6] that is we estimate
\begin{align*}
    |\mathbb{E}\left(\langle \tilde{x}_{n+1} - x_{n+1},\alpha_{v} \rangle\right)| &= |\mathbb{E}\left(\langle \tilde{x}_{n+1} - \hat{x}_{n} - x_{n+1} + x_{n},\alpha_{v} \rangle\right)|\\
    &\leq \left(\mathbb{E}\| \tilde{x}_{n+1} - \hat{x}_{n} - x_{n+1} + x_{n}\|^{2}\right)^{1/2}\left(\mathbb{E}(\|\alpha_{v}\|^{2})\right)^{1/2}.
\end{align*}
Now, from \eqref{eq:UBU_x},
\[
\tilde{x}_{n+1} - \hat{x}_{n} - x_{n+1} + x_{n} = \mathcal{F}(h)(\hat{v}_{n} - v_{n}) - h\mathcal{F}(h/2)c(\nabla f(\tilde{y}_{n}) - \nabla f(y_{n}))
\]
with \eqref{eq:UBU_y}
\[
\tilde{y}_{n} = \hat{x}_{n} + \mathcal{F}(h/2)\hat{v}_{n} + \sqrt{2\gamma c}\int^{t_{n+1/2}}_{t_{n}}\mathcal{F}(t_{n+1/2}-s)dW(s),
\]
and thus, since $\mathcal{F}(h) \leq h$
\begin{align*}
&\mathbb{E}\left(\|\tilde{x}_{n+1}-\hat{x}_{n} - x_{n+1} + x_{n}\|^{2}\right)^{1/2} \leq h\mathbb{E}\left(\|\hat{v}_{n} - v_{n}\|^{2}\right)^{1/2} + \frac{h^{2}cL}{2}\left(\mathbb{E}\left(\|\tilde{y}_{n}-y_{n}\|\right)^{2}\right)^{1/2}.
\end{align*}
Taking into account \eqref{eq:UBU_y} and the definition of $\tilde{y}_{n}$
\[
(\mathbb{E}\|\Tilde{y}_{n}-y_{n}\|^{2})^{1/2} \leq \left(\mathbb{E}\left(\|\hat{x}_{n}-x_{n}\|^{2}\right)\right)^{1/2} + \frac{h}{2}\left(\mathbb{E}\left(\|\hat{v}_{n}-v_{n}\|^{2}\right)\right)^{1/2}
\]
and we conclude that $|\mathbb{E}\left(\langle \tilde{x}_{n+1}-x_{n+1},\alpha_{v}\rangle\right)|$ is bounded above by
\[
h\left(\frac{hcL}{2}\left(\mathbb{E}\left(\|\hat{x}_{n}-x_{n}\|^{2}\right)\right)^{1/2}+ \left(1+\frac{h^{2}cL}{4}\right)\left(\mathbb{E}\left(\|\hat{v}_{n}-v_{n}\|^{2}\right)\right)^{1/2}\right)\left(\mathbb{E}(\|\alpha_{v}\|^{2})\right)^{1/2}.
\]

\subsection{Theorem 25 of \cite{Sanz-Serna2021}}\label{sec:Theorem25}

In this section we are stating the corrected version of \cite{Sanz-Serna2021}[Theorem 25].
\begin{theorem}
    Assume that $f$ satisfies Assumptions \ref{assum:hessian} and \ref{assum:strong_hessian_lip}. Set $\gamma = 2$ and $P_{h} = \hat{P} \otimes I_{d}$. Then, for $h \leq 2$, the UBU discretization satisfies Assumption 22 of \cite{Sanz-Serna2021}  (Assumption \ref{assum:22}) with $p = 2$,
    \begin{align*}
    C_{0} &= K_{0}\left(3+2cL\right) \\
    C_{1} &= K_{1}c^{3/2}Ld^{1/2}\\
    C_{2} &= K_{2}\left((1+4\sqrt{3})c^{2}L^{3/2} + (3 + \frac{\sqrt{42}}{2})c^{3/2}L + 6cL^{1/2} + \sqrt{3}c^{2}L^{s}_{1}\right)d^{1/2},
\end{align*}
where $K_{j}$, $j = 0,1,2,$ are the following absolute constants
\[
K_{0} = \sqrt{\frac{4}{3-\sqrt{5}}}, \qquad K_{1} = \frac{\sqrt{3}}{12}, \qquad K_{2} = \frac{1}{24}\sqrt{\frac{3 + \sqrt{5}}{2}}.
\]
\end{theorem}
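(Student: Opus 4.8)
The plan is to retrace the five-step local error analysis of \cite{Sanz-Serna2021}[Section~7.6] for the UBU integrator with $p=2$, keeping the decomposition $\phi_h(\hat\xi_n,t_n)-\psi_h(\hat\xi_n,t_n)=\alpha_h(\hat\xi_n,t_n)+\beta_h(\hat\xi_n,t_n)$ of that reference and substituting the two corrections established above: the extra inner-product term identified in Section~\ref{sec:C_0new}, and the replacement of the flawed fourth-moment bound \eqref{eq:dimension_inequality} by the chaos estimate of Lemma~\ref{lemma:Gaussian_chaos}. Here $\alpha_h=(\alpha_v,\alpha_x)$ carries the leading $O(h^{5/2})$ local error (from freezing the gradient at $y_n$ and from the stochastic-integral mismatch) and $\beta_h$ carries the genuine $O(h^3)$ remainders. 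Throughout, the passage between the Euclidean $L^2$ norm and $\|\cdot\|_{L^2,P_h}$ is controlled by the two eigenvalues $\lambda_\pm=(3\pm\sqrt5)/2$ of $\hat P$, which is the origin of the absolute constants $K_0,K_1,K_2$.

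First I would bound $\|\alpha_h\|_{L^2,P_h}$ to obtain $C_1$. After Taylor-expanding $\nabla f(y_n)$ around the exact midpoint, the leading part of $\alpha_v$ is $h\,\mathcal{E}(h/2)\,c$ times $\mathcal H$ applied to the $O(h^{3/2})$ position discrepancy, whose stochastic part has covariance proportional to $\gamma c\,I_d$; taking second moments and using $mI_d\prec\mathcal H\prec LI_d$, $\gamma=2$, $h\le 2$, yields $\|\alpha_h\|_{L^2,P_h}\le K_1\,c^{3/2}L\,d^{1/2}h^{5/2}$ with $K_1=\sqrt3/12$, the single $d^{1/2}$ being the trace of that covariance rather than a fourth moment. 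Next I would estimate $\|\beta_h\|_{L^2,P_h}$ term by term as in \cite{Sanz-Serna2021}, each contribution carrying its own powers of $c$ and $L$ and its own numerical constant, and collect them into the four summands $(1+4\sqrt3)c^2L^{3/2}$, $(3+\tfrac{\sqrt{42}}{2})c^{3/2}L$, $6cL^{1/2}$, $\sqrt3\,c^2L_1^s$, all multiplied by $d^{1/2}h^3$ and by $K_2=\tfrac{1}{24}\sqrt{(3+\sqrt5)/2}$. The essential point is the last summand: it comes from the second-order Taylor remainder of the gradient, a term of the form $\tfrac12\mathcal H'(\overline x)[\overline v,\overline v]$ with $\overline v\sim\mathcal N(0,cI_d)$; writing $\overline v=\sqrt c\,p$, $p\sim\mathcal N(0,I_d)$, and applying Lemma~\ref{lemma:Gaussian_chaos} to $A=\mathcal H'(\overline x)$ gives $\mathbb{E}\|\mathcal H'(\overline x)[\overline v,\overline v]\|^2\le 3(L_1^s)^2c^2d$, which replaces \eqref{eq:dimension_inequality} and keeps the dimension dependence at $d^{1/2}$ rather than $d$.

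Finally I would establish the inner-product bound giving $C_0$, now incorporating Section~\ref{sec:C_0new}: because of the off-diagonal block of $P_h$,
\[
\bigl|\langle\psi_h(\hat\xi_n,t_n)-\psi_h(\xi_n,t_n),\alpha_h(\hat\xi_n,t_n)\rangle_{L^2,P_h}\bigr|\le\bigl|\mathbb{E}\langle\tilde v_{n+1}-v_{n+1},\alpha_v\rangle\bigr|+\bigl|\mathbb{E}\langle\tilde x_{n+1}-x_{n+1},\alpha_v\rangle\bigr|.
\]
The first term is handled exactly as in \cite{Sanz-Serna2021}[Eq.~(43)] and, combined with the $h\le 2$ reductions, accounts for the constant part of $C_0$; the second term is controlled by the estimate derived in Section~\ref{sec:C_0new}, which after using $h\le2$ supplies the additional $2cL$ contribution, so that $C_0=K_0(3+2cL)$ with $K_0=\sqrt{4/(3-\sqrt5)}$ arising from $\lambda_-^{-1}$. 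Assembling these three bounds shows that Assumption~\ref{assum:22} holds with $p=2$ and the stated constants.

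The main obstacle is not conceptual but is the careful bookkeeping: propagating every numerical constant and every power of $c,L,h$ through the Taylor remainders and the $P_h$-weighted-norm conversions, and making sure that the dimension enters only through the single $\sqrt d$ in $C_1$ and $C_2$. The genuinely new ingredients are confined to the use of Lemma~\ref{lemma:Gaussian_chaos} in the $\beta_h$ estimate and to the additional inner-product term in the $C_0$ estimate; everything else is a constant-by-constant repetition of the argument in \cite{Sanz-Serna2021}.
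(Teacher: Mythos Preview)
Your proposal is correct and follows essentially the same route as the paper's proof: both arguments simply replay the five-step local error analysis of \cite{Sanz-Serna2021}[Section~7.6], inserting the two corrections developed earlier in the note---the additional cross term $\langle\tilde x_{n+1}-x_{n+1},\alpha_v\rangle$ from Section~\ref{sec:C_0new} in the $C_0$ estimate, and Lemma~\ref{lemma:Gaussian_chaos} in place of the flawed bound \eqref{eq:dimension_inequality} in the $C_2$ estimate---with the eigenvalue comparison for $\hat P$ supplying the absolute constants $K_0,K_1,K_2$. Your proposal is in fact more explicit than the paper's own proof, which records only the combined inner-product bound yielding $C_0$ and then states that $C_1,C_2$ follow ``by following the argument of \cite{Sanz-Serna2021}[Section~7.6] using Lemma~\ref{lemma:Gaussian_chaos}.''
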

\begin{proof}
Combining estimates of Section \ref{sec:C_0new} in the norm $\|(v,x)\|^{2}_{P_{h}} := \|v\|^{2} + 2\left\langle v,x\right\rangle + 2\|x\|^{2}$ and using the fact that $\|(x,v)\| \leq \sqrt{\frac{2}{3-\sqrt{5}}}\|(x,v)\|_{P_{h}}$ we have
\begin{align*}
 &\left|\left\langle \psi_{h}(\hat{\xi}_{n},t_{n}) - \psi_{h}(\xi_{n},t_{n}),\alpha_{h}(\hat{\xi}_{n},t_{n})\right\rangle_{L^{2},P_{h}}\right|\leq \\
 & h\left(\left(\frac{hcL}{2} + cL\right)\left(\mathbb{E}\left(\|\hat{x}_{n}-x_{n}\|^{2}\right)\right)^{1/2}+ \left(3+\frac{h^{2}cL}{4}+\frac{hcL}{2}\right)\left(\mathbb{E}\left(\|\hat{v}_{n}-v_{n}\|^{2}\right)\right)^{1/2}\right)\left(\mathbb{E}(\|\alpha_{v}\|^{2})\right)^{1/2}\\
 &\leq h\sqrt{\frac{4}{3-\sqrt{5}}}\left(3+2cL\right)\|\hat{\xi}_{n}-\xi_{n}\|_{L^{2},P_{h}}\|\alpha_{h}\|_{L^{2},P_{h}},
\end{align*}
and the required $C_{0}$ constant, with $C_{2}$ given by following the argument of \cite{Sanz-Serna2021}[Section 7.6] using Lemma \ref{lemma:Gaussian_chaos}.
\end{proof}

\acks{We would like to thank Jesus Sanz-Serna and Kostas Zygalakis for the helpful correspondence regarding their paper. }

\vskip 0.2in
\bibliography{sample}

\end{document}